\newcommand{\fixred}[1]{\textcolor{red}{[#1]}}
\renewcommand{\cite}[1]{\citep{#1}}
\newcommand\Algphase[1]{%
\vspace*{-.7\baselineskip}\Statex\hspace*{\dimexpr-\algorithmicindent-2pt\relax}\rule{0.5\textwidth}{0.4pt}%
\Statex\hspace*{-\algorithmicindent}\textbf{#1}%
\vspace*{-.7\baselineskip}\Statex\hspace*{\dimexpr-\algorithmicindent-2pt\relax}\rule{0.5\textwidth}{0.4pt}%
}
\newtheorem{thm}{Theorem}
\newtheorem*{definition}{Definition}
\newtheorem{hypothesis}[thm]{Hypothesis}
\newtheorem*{remark}{Remark}
\newcounter{row}
\newcounter{col}
\newcommand\setrow[4]{
  \setcounter{col}{1}
  \foreach \n in {#1, #2, #3, #4} {
    \edef\x{\value{col} - 0.5}
    \edef\y{4.5 - \value{row}}
    \node[anchor=center] at (\x, \y) {\n};
    \stepcounter{col}
  }
  \stepcounter{row}
}
\newcommand\setroweight[8]{
  \setcounter{col}{1}
  \foreach \n in {#1, #2, #3, #4, #5, #6, #7, #8} {
    \edef\x{\value{col} - 0.5}
    \edef\y{8.5 - \value{row}}
    \node[anchor=center] at (\x, \y) {\n};
    \stepcounter{col}
  }
  \stepcounter{row}
}
\newcommand\setrownine[9]{
  \setcounter{col}{1}
  \foreach \n in {#1, #2, #3, #4, #5, #6, #7, #8,  #9} {
    \edef\x{\value{col} - 0.5}
    \edef\y{9.5 - \value{row}}
    \node[anchor=center] at (\x, \y) {\n};
    \stepcounter{col}
  }
  \stepcounter{row}
}
\tikzset{
  treenode/.style = {align=center, inner sep=0pt, text centered,
    font=\sffamily},
  arn_n/.style = {treenode, circle, white, font=\sffamily\bfseries, draw=black,
    fill=black, text width=1.5em},
  arn_r/.style = {treenode, circle, red, draw=red, 
    text width=1.5em, very thick},
  arn_x/.style = {treenode, rectangle, draw=fblack,
    minimum width=0.5em, minimum height=0.5em}
}
\tikzset{>=latex}
\pgfplotsset{every axis/.append style={
                    legend style={font=\tiny,line width=.5pt,mark size=.2pt},
                    }}
\newcommand\mygrid[3][]{      
  \let\mymatrixcontent\empty  
  \newcommand{\row}{%
    \foreach \j in {1,...,#2}{
      \foreach \i in {1,...,#3} {%
        \begingroup\edef\x{\endgroup
           \noexpand\gappto\noexpand\mymatrixcontent{|[draw,minimum size=1cm,#1]| \&}}\x
        }%
      \gappto\mymatrixcontent{\\}%
    }
  }
  \row
  \matrix(m)[matrix of nodes,ampersand replacement=\&,row sep=-\pgflinewidth,column sep=-\pgflinewidth]{
    \mymatrixcontent
  };
  \foreach \x[count=\i from 0] in {1,...,#2}\node[left] at (m-\x-1.west) {$\i$};
  \foreach \y[count=\j from 0] in {1,...,#3}\node[above] at (m-1-\y.north) {$\j$};
}
\begin{document}

\title{Searching for Topological Symmetry in Data Haystack}

\author{Kallol Roy \qquad Anh Tong \qquad Jaesik Choi\\
Ulsan National Institute of Science and Technology\\
Ulsan, Korea\\
\{kallol,anhth,jaesik\}unist.ac.kr
 }
\maketitle
%
\begin{abstract}
Finding interesting symmetrical topological structures in high-dimensional systems is an important problem in statistical machine learning. Limited amount of available high-dimensional data and its sensitivity to noise pose computational challenges to find symmetry. Our paper presents a new method to find local symmetries in a low-dimensional 2-D grid structure which is embedded in high-dimensional structure. To compute the symmetry in a grid structure, we introduce three legal grid moves (i) Commutation (ii) Cyclic Permutation (iii) Stabilization on  sets of local grid squares, grid blocks. The three grid moves are legal transformations as they preserve the  statistical distribution of hamming distances in each grid block. We propose and coin the term of grid symmetry of data on the 2-D data grid as the invariance of statistical distributions of hamming distance are preserved after a sequence of grid moves. We have computed and analyzed the grid symmetry of data on multivariate Gaussian distributions and Gamma distributions with noise.
\end{abstract}

\section{Introduction}
\label{sec:intro}

The current hurdle in big data is to develop machine learning representations which can extract meaningful features. The principle of symmetry plays a natural foundation in the development of such a learning representation by getting rid of unimportant variations, while making the important ones easy to detect. Exploiting symmetries reduces computational complexity and leads to the development of new generalizations of learning algorithms and  provides a new approach in deep symmetry networks \cite{NIPS2014_5424,BadrinarayananM15}. There is recent interest in exploiting cyclic symmetry in convolution neural network architectures\cite{2016arXiv160202660D,P115.045}. Encoding these properties in networks by using the \textit{transnational equivariance} allows the model for parameter budgeting efficiently.     

Searching for symmetry in high-dimensional objects under certain \textit{low complexity} constraints though possible is a computationally challenging task. Symmetry based machine learning are broadly classified as $(1)$ Exchangeable variable models \cite{NiepertD14} $(2)$ Deep Symmetry Networks $(3)$ Symmetry based semantic parsing \cite{PoonD09}. 

One way to solve this problem is to use a topology-preserving dimensionality reduction method, and then search for symmetrical structures.  High-dimensional models with low-dimensional structures of patterns or symmetry are ubiquitous. Extracting low-dimensional structures in high-dimensional models have widespread uses in various disciplines including neuroscience, economics, and genetics. Our work is inspired by the Noether's Theorem of unification symmetry and conservation in theoretical physics \cite{schwarzbach2010noether}.
This paper presents a novel method of searching symmetry on $2-$D grid space, where the Betti number, an important topological property in persistent homology, is computed (or efficiently approximated). We define three grid moves (i) Commutation (ii) Cyclic Permutation (iii) Stabilization on grid blocks, consisting of a finite number of local grid squares. Our algorithm finds symmetry in each grid block after a finite sequence of grid moves.\newline
We prove the upper bound of the Hamming distance $H(n)$ is bounded. We have used the metric of Hamming distance as measure of randomness in the search of symmetry. The randomness may come from the added noise in the signal data or inherently embedded in the data. Our proposed method of topological data processing is immune the to effect of noise for most cases and is used for searching the local symmetry.  Our method of estimating the upper bound of the Hamming distance $H(n)$ can be useful in detecting the \textit{phase change} in data, which have profound implications in security, finance, and other areas. 
 
The organization of the paper is as follows: Section~\ref{LD} gives a quick introduction to Low-Dimensional Topological Models, Subsection~\ref{GH} will introduce the newly construct of Grid Diagrams perspective,  Section~\ref{CS} will explain our method of searching symmetry,  Section~\ref{Ising} explain our newly proposed Ising model of data, Section~\ref{RWW}  explains the related work, Section~\ref{ERI} presents our experimental results and Section~\ref{Conc}  concludes the paper. We have used the following important notations in our paper
\subsection{Notations}
$(1)$  \quad  $\beta$ \quad\quad\quad\quad       Betti number\newline
$(2)$  \quad  $H$          \quad\quad\quad\quad  Hamming distance\newline
$(3)$  \quad  $g$          \quad\quad\quad\quad  Small square grid \newline   
$(4)$  \quad  $G$          \quad\quad\quad\quad  2D Grid  \newline 
$(5)$  \quad  $l$         \quad\quad\quad\quad   Dimension of small grid\newline
$(6)$  \quad  $\mathbb{H}$ \quad\quad\quad\quad  Hamiltonian \newline
$(7)$  \quad  $\sigma$    \quad\quad\quad\quad   Configuration  \newline
$(8)$  \quad  $J_{g_{1},g_{2}}$ \quad\quad\quad  Grid interaction parameter \newline    $(9)$  \quad  $\Gamma(l)$      \quad\quad\quad   Scaling invariant parameter \newline
$(10)$  $T_{1},T_{2},T_{3}$ \quad Commutation, Cyclic Permutation, Stabilization

\section{Low-Dimensional Topological Models}\label{LD}
Our Algorithm of finding the symmetry in Data uses the topological features to search the local symmetry. Our method is of general nature and features other than the topological ones can be extended in it.  We encode our Data space as a topological space, because of its high-dimensional features  (symmetry and connectivity) can be inferred from its low-dimensional, local representations as in  \cite{Chen10,Edelsbrunner07,Carlsson14}. 

\subsection{Topological Invariants on Data Manifolds}
Computing topological invariants in low-dimensional space is used for exploring the symmetry in data space in our paper. Homology groups are increasingly used in computing the invariants as their computations are more feasible and provide the important information about the shape of the object. The homology groups for the  $2{-}D$ object are computed efficiently by the digitization \cite{Evako:2006:TPC:1143472.1649118,Chen10,chen2004discrete}. Digital topology allows discretizing data object by integrating the geometric and topological constraints. The digital model of a $2-$dimensional continuous object is called a digital $2-$surface. The intrinsic topology of the object is used without referring to an embedding space. A set $D$ is defined as a $2-$cell if it is homeomorphic to a closed unit square, similarly, a set $D$ is a $1-$cell if it is homeomorphic to a closed unit segment and a set $D$ is a circle(or $1-$sphere) if it is homeomorphic to a unit circle. The interior and the boundary of an $n-$cell $D$, are denoted as $IntD$ and $\partial D$ with the following boundary condition 
\begin{equation}\label{Homology}
D = IntD \cup \partial D
\end{equation}
Intuitively we can visualize the boundary of a $1-$ cell has two endpoints, the boundary of a $2-$cell is a circle. For the sake of completeness, we define $0-$cell as a single point for which $\partial D = \emptyset$.  The following properties hold for digital topology\newline
$\bullet$ For a  circle  $C$  and a $1-$cell $D$  contained in $C$ the set 
$C-IntD$ is a $1-$cell.\newline
$\bullet$ If $1-$cells $C_{1}$ and $C_{2}$ are such that $C_{1} \cap C_{2} = \partial C_{1}\cap \partial C_{2} = v$ holds, then $C_{1} \cup C_{2} = E$ is a $1-$cell.\newline
$\bullet$ For $2-$cells $D_{1}$ and $D_{2}$ such that $D_{1} \cap D_{2} = \partial D_{1} \cap \partial D_{2} = C$ is a $1-$cell holds, then $D_{1}\cup D_{2} = B$ is a $2-$ cell.\newline
$\bullet$ An $(i+1)-$cell can be formed by two disjoint $i-$cells that are parallel\newline
$\bullet$ An $i-$cell and it's parallel move form an $(i+1)-$cell.\newline
We now formally define the \textit{Digital Surface} as\newline

\begin{definition}[Digital Surface] 
A digital surface is the set of surface points each of which has two adjacent components not in the surface in its neighborhood
\end{definition}

In $2-$D space,  algorithms to compute $Betti$ numbers are of complexity $O(n\log^{2} n)$  or $O(n\log^{3} n)$. Our paper use the properties of manifolds in $2-$D digital spaces for the computation of topological invariants. We formally define \textit{digital manifold} as\newline  

\begin{definition}[Digital Manifold] 
A connected subset $S$ in digital space $\sum$ is a $i-$D digital manifold if\newline
$\bullet$ Any two $i-$cells are $(i-1)$ connected in $S$\newline
$\bullet$ Every (i-1)cell in $S$ has only one or two parallel-moves in $S$\newline
$\bullet$ $S$ does not contain any $(i+1)-$cell
\end{definition}

We represent a compact $3-$dimensional manifold in $R^{3}$ by a surface. Then the homology group is expressed in terms of its boundary surface. The Betti numbers related to homology groups are used in topological classification. For a $k-$ manifold, homology group $H_{i}, \quad i = 0, \cdots, k$, indicates the number of holes in each $i-$skeleton of the manifold. For a topological space $M$, its homology groups, $H_{i}(M)$ are certain measures of $i-$dimensional holes in $M$.  

\begin{definition}[Betti number] 
The \textit{Betti number} $\beta$ is formally defined as the rank of the quotient group as \newline
\begin{equation}\label{Homology}
\beta = \text{rank}  H_{i}(M)
\end{equation}
\end{definition}
In our algorithm we use the statistical distribution of Betti number $\beta$ in each small grid square of length $l$ for computing of local symmetry.

\subsection{Grid Diagrams for Low-Dimensional Topology}
\label{GH}
A \textit{grid diagram} is defined as a two dimensional square grid such that each  square inside the grid is filled with symbols $x$, $o$ or is left blank, with the constraint such that every column and every row has exactly one $x$ and one $o$. The symbols $x$ and $o$ are abstract decorations that fill the small grid square. 

The \textit{grid  number} for the grid diagram is the number of columns (or rows). The grid diagram is associated with an equivalent knot by joining the $x$ and $o$ symbols in each column and row by a straight line with the convention of vertical lines crosses over the horizontal lines (as shown by the dotted red lines in Figure~\ref{knot}). These lines joining the symbols $x$ and $o$ form the strands of the knot and removing the grid give us the planar projection of the knot (trivial knot in our example) as shown in Figure~\ref{knot} \cite{Manolescu12,ozsvath15,Sarkar10}. Grid diagrams are extensively used recently because of the use the grids gives a combinatorial definition of knot Floer homology \cite{Sarkar_acombinatorial}.  
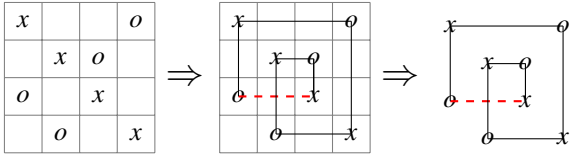
\begin{figure}[h!]
\centering
\begin{tikzpicture}
\draw[step=0.5cm,color=gray] (-1,-1) grid (1,1);
\node at (-0.75,+0.75) {$x$};
\node at (+0.75,+0.75) {$o$};
\node at (-0.25,+0.25) {$x$};
\node at (+0.25,+0.25) {$o$};
\node at (-0.75,-0.25) {$o$};
\node at (+0.25,-0.25) {$x$};
\node at (-0.25,-0.75) {$o$};
\node at (+0.75,-0.75) {$x$};
\node at (+1.40,0.00) {\Large $\Rightarrow$};
\end{tikzpicture} 
\begin{tikzpicture}
\draw[step=0.5cm,color=gray] (-1,-1) grid (1,1);
\node at (-0.75,+0.75) {$x$};
\node at (+0.75,+0.75) {$o$};
\node at (-0.25,+0.25) {$x$};
\node at (+0.25,+0.25) {$o$};
\node at (-0.75,-0.25) {$o$};
\node at (+0.25,-0.25) {$x$};
\node at (-0.25,-0.75) {$o$};
\node at (+0.75,-0.75) {$x$};
\draw (-0.75,+0.75) -- (+0.75,+0.75);
\draw (-0.25,+0.25) -- (+0.25,+0.25);
\draw[red,thick,dashed] (-0.75,-0.25) -- (+0.25,-0.25);
\draw (-0.25,-0.75) -- (+0.75,-0.75);
\draw (-0.75,+0.75) -- (-0.75,-0.25);
\draw (+0.75,+0.75) -- (+0.75,-0.75);
\draw (-0.25,+0.25) -- (-0.25,-0.75);
\draw (+0.25,+0.25) -- (+0.25,-0.25);
\draw (+0.25,+0.25) -- (+0.25,-0.25);
\node at (+1.40,0.00) {\Large $\Rightarrow$};
\end{tikzpicture} 
\begin{tikzpicture}
\node at (-0.75,+0.75) {$x$};
\node at (+0.75,+0.75) {$o$};
\node at (-0.25,+0.25) {$x$};
\node at (+0.25,+0.25) {$o$};
\node at (-0.75,-0.25) {$o$};
\node at (+0.25,-0.25) {$x$};
\node at (-0.25,-0.75) {$o$};
\node at (+0.75,-0.75) {$x$};
\draw (-0.75,+0.75) -- (+0.75,+0.75);
\draw (-0.25,+0.25) -- (+0.25,+0.25);
\draw[red,thick,dashed] (-0.75,-0.25) -- (+0.25,-0.25);
\draw (-0.25,-0.75) -- (+0.75,-0.75);
\draw (-0.75,+0.75) -- (-0.75,-0.25);
\draw (+0.75,+0.75) -- (+0.75,-0.75);
\draw (-0.25,+0.25) -- (-0.25,-0.75);
\draw (+0.25,+0.25) -- (+0.25,-0.25);
\draw (+0.25,+0.25) -- (+0.25,-0.25);
\end{tikzpicture} 
\caption{Knot Generation from Grid Diagram}
    \label{knot}
\end{figure}\newline
Three grid moves (explained in Section~\ref{CS}) to relate the grid diagrams are $(1)$ \textbf{Commutation $\mathbf{(T_{1})}$} $(2)$ \textbf{Cyclic Permutations $\mathbf{(T_{2})}$} $(3)$ \textbf{Stabilization $\mathbf{(T_{3})}$} \cite{Manolescu12,ozsvath15,Sarkar10}. These grid moves are analogous to Reidemeister moves for knot diagrams. The grid moves are used to generate equivalent relations. \textbf{Theorem}~\ref{q} explains that a sequence of grid moves gives the invariant knots.  A knot invariant is defined in the form of a polynomial such as the Alexander polynomial,  Conway polynomial, HOMFLY polynomial, Jones polynomial etc.

\begin{thm}{\cite{reidemeister32}}\label{q}
$G_{1}$ is a grid diagram with its equivalent knot $K_{1}$ and  grid diagram $G_{2}$ with its equivalent knot $K_{2}$. $K_{1}$ and $K_{2}$ are equivalent knots if and only if there exists a sequence of commutation, stabilization and cyclic permutation grid moves transform $G_{1}$ to $G_{2}$.
\end{thm}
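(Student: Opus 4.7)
The plan is to prove the two implications separately, treating the theorem as the grid-diagram analogue of the classical Reidemeister theorem for link diagrams. For the easier ``if'' direction, I would show that each of the three grid moves induces an ambient isotopy of the associated knot in $\mathbb{R}^3$, so that the knot type is preserved under any finite composition of them. Concretely, a commutation $T_1$ of two adjacent, non-interleaved rows (or columns) yields two grid diagrams whose strand-union pictures differ only by a planar isotopy that slides one horizontal arc past another; a cyclic permutation $T_2$ corresponds to viewing the grid on the torus obtained by identifying opposite sides (the standard realization of grid diagrams as arc-presentations), where shifting a row or column around the seam is again an isotopy of the embedded knot; and a stabilization $T_3$, which inserts a new row and column containing a small bigon-shaped arc, is precisely a Reidemeister~I move together with a planar isotopy. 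Checking each of these three cases reduces to verifying the local pictures, which is routine once the over/under crossing convention (vertical over horizontal) is fixed.

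For the ``only if'' direction, the plan is to reduce the statement to the classical Reidemeister theorem, which asserts that $K_1$ and $K_2$ are equivalent if and only if the knot diagrams $D(G_1)$ and $D(G_2)$ associated to $G_1$ and $G_2$ differ by a finite sequence of planar isotopies and Reidemeister moves $R_1$, $R_2$, $R_3$. I would then show that (i) any planar isotopy between two grid-diagram projections can be realized by a sequence of commutations $T_1$ after first performing enough stabilizations $T_3$ to produce a sufficiently fine grid, and (ii) each of $R_1$, $R_2$, $R_3$ can be simulated by a finite sequence of $T_1$, $T_2$, $T_3$. For $R_1$ this is essentially the definition of stabilization. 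For $R_2$ and $R_3$ one uses the now-standard trick of first stabilizing so that each strand involved in the move occupies its own row and column, then performing a sequence of commutations that rearranges the rows and columns to realize the local change, then destabilizing to eliminate the auxiliary rows and columns. Cyclic permutation $T_2$ is used to handle the case where the relevant rows or columns straddle the boundary of the grid.

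The main obstacle is the simulation of $R_2$ and $R_3$. The difficulty is not conceptual but combinatorial: one must show that after a suitable stabilization the ``room'' created around the crossings is enough to perform commutations that execute the move, and that the choice of where to stabilize can always be made compatibly with the existing $x,o$ decorations without violating the one-$x$-one-$o$-per-row-and-column constraint. The cleanest way I would organize the argument is by induction on the number of Reidemeister moves needed to carry $D(G_1)$ to $D(G_2)$, establishing once and for all a ``normal form lemma'' that says any local modification of the projection supported in a disk can be realized by finitely many grid moves after a bounded number of stabilizations. Given this lemma, the induction closes immediately, and combining the two directions completes the theorem.
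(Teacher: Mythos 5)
The paper does not actually prove this statement: it is quoted as a known result, with the citation to \cite{reidemeister32} standing in for the grid-diagram analogue of Reidemeister's theorem, a result due to Cromwell and Dynnikov and reproved combinatorially in the grid-homology sources the paper relies on (\cite{Manolescu12,ozsvath15,Sarkar10}). So there is no internal argument of the paper to compare yours against; what can be judged is whether your outline matches the standard proof, and in structure it does. Your ``if'' direction is essentially right and essentially complete once the local pictures are checked, with one small correction: not every stabilization introduces a kink --- depending on the stabilization type the new corner is either a mere planar isotopy of the strand or a Reidemeister~I loop --- but in either case the knot type is unchanged, so the conclusion stands.

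The ``only if'' direction is where your proposal has a genuine gap: all of the substance of the theorem is concentrated in the ``normal form lemma'' that you assert rather than prove. Claiming that planar isotopies and the moves $R_2$, $R_3$ ``can be simulated by finitely many grid moves after a bounded number of stabilizations'' is precisely the statement that needs proof, and it is not a routine local verification. One must first show that any knot admits a rectangular (grid) projection at all, then that two grid projections of planar-isotopic diagrams are related by commutations, cyclic permutations and (de)stabilizations, and only then treat the three Reidemeister moves, all while respecting the one-$x$-one-$o$-per-row-and-column constraint; this bookkeeping is exactly where the published proofs spend their effort, and an induction on the number of Reidemeister moves does not by itself supply it. As a plan your reduction to the classical Reidemeister theorem is the correct and standard route, but to count as a proof the simulation lemma must be established in detail --- otherwise the honest course is to cite Cromwell's theorem, which is in effect what the paper itself does.
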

Our goal in this paper is to define the symmetrical invariance in grid diagrams under uncertainty. For searching the local symmetry we have moved away from generating knots from the planar grid diagrams and instead use the distribution of hamming distance among the grid blocks explained in Section~\ref{CS}. A finite sequence of operations $T$ comprising of Commutation, Cyclic Permutation, Stabilization in a defined order is 
\begin{equation}\label{sequenceOp}
T = T_{1}^{a}\circ T_{2}^{b} \circ T_{3}^{c}
\end{equation}
where $a,b,c \in \mathcal{R}$. The stabilization operations $T_{3}$ of kink addition and kink subtraction occur in pairs to maintain the constant grid number.

\section{Searching Symmetry with Uncertainty}
\label{CS}
Inference in high dimensional data is challenge because of the curse of dimensionality. Thus, high dimensional data are usually converted to low-dimensional codes by $(1)$ Neural Networks \cite{Hinton06}; $(2)$ Nonlinear dimension reduction \cite{Tenenbaum00,lee07}; and $(3)$ Topological and Geometric methods \cite{wang12}. 

In this paper, we propose a new form of symmetry termed a \textbf{grid symmetry} with the following hypothesis.
\begin{hypothesis}[Invariance of Symmetric Probability]
The symmetry on a grid is represented by Commutation, Cyclic Permutation and Stabilization. The statistical distribution of the Betti numbers remains conserved during the above defined legal transformations. 
\end{hypothesis}

Symmetry of a geometric object comes with the concept of automorphisms. Legal transformations allowed for grid diagrams are $1$ \textbf{Commutation} $(2)$ \textbf{Cyclic Permutations} $(3)$ \textbf{Stabilization} defined as in \cite{ozsvath15,Manolescu12,Sarkar_acombinatorial,Sarkar10,Ozsvath04,Hedden08,reidemeister32}. \newline
\textbf{Commutation:} Commutation is defined as an interchange of two consecutive rows or columns of a grid diagram. The commutation is permitted only between rows or columns those are \textit{non-nested}.
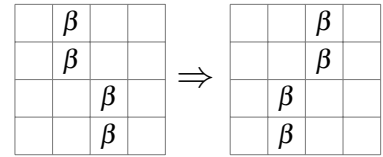
\begin{figure}[h]
\centering
\begin{tikzpicture}
\draw[step=0.5cm,color=gray] (-1,-1) grid (1,1);
\node at (-0.25,+0.75) {$\beta$};
\node at (-0.25,+0.25) {$\beta$};
\node at (+0.25,-0.25) {$\beta$};
\node at (+0.25,-0.75) {$\beta$};
\node at (+1.40,0.00) {\Large $\Rightarrow$};
\end{tikzpicture} 
\begin{tikzpicture}
\draw[step=0.5cm,color=gray] (-1,-1) grid (1,1);
\node at (+0.25,+0.75) {$\beta$};
\node at (+0.25,+0.25) {$\beta$};
\node at (-0.25,-0.25) {$\beta$};
\node at (-0.25,-0.75) {$\beta$};
\end{tikzpicture}
\caption{Commutation of a grid diagram}
    \label{Commutation of Grid Diagram}
\end{figure}
\\
\textbf{Cyclic Permutation:} Cyclic permutation preserves the grid number and is defined as the removal of an outer row/column and replacing it to the opposite side  of the grid.
\begin{figure}[h!]
\centering
\begin{tikzpicture}
\draw[step=0.5cm,color=gray] (-1,-1) grid (1,1);
\node at (-0.75,+0.75) {$\beta$};
\node at (+0.75,+0.75) {$\beta$};
\node at (-0.25,+0.25) {$\beta$};
\node at (+0.25,+0.25) {$\beta$};
\node at (-0.75,-0.25) {$\beta$};
\node at (+0.25,-0.25) {$\beta$};
\node at (-0.25,-0.75) {$\beta$};
\node at (+0.75,-0.75) {$\beta$};
\node at (+1.40,0.00) {\Large $\Rightarrow$};
\end{tikzpicture} 
\begin{tikzpicture}
\draw[step=0.5cm,color=gray] (-1,-1) grid (1,1);
\node at (+0.25,+0.75) {$\beta$};
\node at (+0.75,+0.75) {$\beta$};
\node at (-0.75,+0.25) {$\beta$};
\node at (-0.25,+0.25) {$\beta$};
\node at (+0.75,-0.25) {$\beta$};
\node at (-0.25,-0.25) {$\beta$};
\node at (-0.75,-0.75) {$\beta$};
\node at (+0.25,-0.75) {$\beta$};
\end{tikzpicture}
\caption{Cyclic Permutation of a grid diagram}
    \label{Cyclic Permutation of Grid Diagram}
\end{figure}
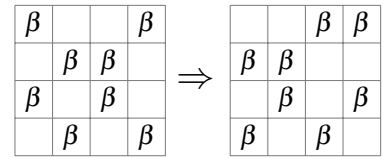
\\
\textbf{Stabilization:} Stabilization is performed by kink addition or removal and thus does not preserve the \textit{grid number}. A kink is added either to the right or left of a column or above or below of a row. Adding a kink to a column $c$ is done by inserting an empty row between the symbols $x$ and $o$ of the column $c$. Then an empty column is inserted either to the right or left of column $c$. We then move the either symbol $x$ or $o$ to the adjacent grid square in the added column. We then complete the added row and column with the symbols $x$ and $o$ appropriately. To add a kink to a row, we have to swap the row and column operations. To remove a kink (grid number decreases by $1$), we follow the instructions in reverse order. 
\begin{figure}
\centering
\begin{tikzpicture}[scale=0.45]
\begin{scope}
\draw[color=gray] (0,0) grid (6,6);
\setcounter{row}{1}
\setroweight { }{ }{ }{ }{ }{ }{ }{ }{ }
\setroweight { }{ }{ }{ }{ }{ }{ }{ }{ }
\setroweight { }{ }{ }{ }{ }{ }{ }{ }{ }
\setroweight { }{ }{ }{ }{ }{ }{ }{ }{ }
\setroweight { }{ }{$\beta$}{ }{$\beta$}{ }{ }{ }{ }
\end{scope}
\begin{scope}[xshift=8cm, yshift = -0.5cm]
\draw[color=gray] (0,0) grid (7,7);
\setcounter{row}{1}
\setrownine { }{ }{ }{ }{ }{ }{ }{ }{ } { }
\setrownine { }{ }{ }{ }{ }{ }{ }{ }{ } { }
\setrownine { }{ }{ }{ }{ }{ }{ }{ }{ } { }
\setrownine { }{ }{ }{ }{ }{ }{ }{ }{ } { }
\setrownine { }{ }{ }{ }{$\beta$}{$\beta$}{ }{ }{ } { }
\setrownine { }{ }{$\beta$}{ }{$\beta$ }{ }{ }{ }{ } { }
\end{scope}
\node at (+7 ,2.95) {\Large $\Rightarrow$};

\setcounter{row}{1}
\end{tikzpicture}
\caption{Stabilization of a grid diagram}
    \label{Stabilization of Grid Diagram}
\end{figure}
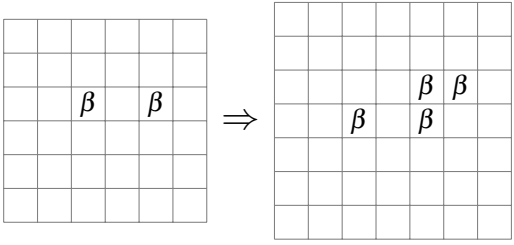

After projecting the high dimensional data to our $2$D grid space, 
we compute the Betti number $\beta$ in each small grid square of length $l$. We compute only a particular order of Betti number $\beta_{k}$ for a fixed $k$ and mark the grid square with the Betti number if $\beta_{k} \neq 0$ and leave the grid square empty if $\beta_{k} = 0$. For example we fill up the small grid square if the number of holes in it is at least $1$ i.e $\beta_{2} \neq 0$ and leave the grid square empty  when $\beta_{2} = 0$. We have used the symbol $\beta$ to represent $\beta_{k}$ for fixed $k$. We introduce this binary topological marking to get the sparse representation. For sufficiently sparse data we get a grid diagram where most of small grid squares are left empty and others are marked with $\beta$. This binary marking makes our model consistent with the grid homology for the studying of invariance of knots. Here, the difference is that we investigate the invariance of the probabilistic distribution of Betti numbers using the metric of Hamming distance.

\begin{figure*}[!h]
\begin{tikzpicture}[scale=.5]
	\begin{scope}
		\draw (0,0) grid (4,4);
        \node[anchor=center] at (2, -0.5) {Original};
        \setcounter{row}{1}
        \setrow {$\beta$}{$\beta$}{ }{ }
        \setrow {$\beta$}{ }{ }{ }
        \setrow { }{$\beta$}{ }{$\beta$}
        \setrow { }{$\beta$}{ }{$\beta$}
	\end{scope}
    
    \begin{scope}[xshift=7cm]
		\draw (0,0) grid (4,4);
        \node[anchor=center] at (2, -0.5) {$H = 12$};
        \setcounter{row}{1}
        \setrow { }{$\beta$}{$\beta$}{ }
        \setrow { }{$\beta$}{ }{ }
        \setrow {$\beta$}{ }{$\beta$}{ }
        \setrow {$\beta$}{ }{$\beta$}{ }
	\end{scope}
    
    \begin{scope}[xshift=14cm]
		\draw (0,0) grid (4,4);
        \node[anchor=center] at (2, -0.5) {$H = 6$};
        \setcounter{row}{1}
        \setrow { }{ }{$\beta$}{$\beta$}
        \setrow { }{ }{$\beta$}{ }
        \setrow { }{$\beta$}{ }{$\beta$}
        \setrow { }{$\beta$}{ }{$\beta$}
	\end{scope}
    
    \begin{scope}[xshift=21cm]
		\draw (0,0) grid (4,4);
        \node[anchor=center] at (2, -0.5) {$H = 14$};
        
        \setcounter{row}{1}
        \setrow { }{ }{$\beta$}{$\beta$}
        \setrow { }{ }{ }{$\beta$}
        \setrow {$\beta$}{ }{$\beta$}{ }
        \setrow {$\beta$}{ }{$\beta$}{ }
	\end{scope}
    
    \begin{scope}[xshift=28cm]
		\draw (0,0) grid (4,4);
        \node[anchor=center] at (2, -0.5) {$H = 6$};
        
        \setcounter{row}{1}
        \setrow { }{ }{$\beta$}{$\beta$}
        \setrow { }{ }{$\beta$}{ }
        \setrow { }{$\beta$}{ }{$\beta$}
        \setrow { }{$\beta$}{ }{$\beta$}
	\end{scope}
    
    \draw[->] ( 4.2, 2) -- (6.8,2) node[midway, above = 5pt] {$T_2$};
    \draw[->] ( 11.2, 2) -- (13.8,2) node[midway, above = 5pt] {$T_2$};
    \draw[->] ( 18.2, 2) -- (20.8,2) node[midway, above = 5pt] {$T_1$};
    \draw[->] ( 25.2, 2) -- (27.8,2) node[midway, above = 5pt] {$T_1$};
\end{tikzpicture}
\centering
\caption{An illustrative example of computing Hamming distance as result of a sequence of operations $T_2 \circ T_2 \circ T_1 \circ T_1$}
\label{Ilustration Hamming}
\end{figure*}

After the marking of the grid squares, we randomly sample a square grid block of size $l$ consisting of $n^{2}$ small grid squares and apply a finite sequence of operations $T$ defined in Equation~\ref{sequenceOp} on the sampled grid block as shown in Figure~\ref{Hamming}. The small grid squares colored red and green as illustrated in Figure~\ref{Hamming} are of dimension  $l$.  The sampled grid block $ABCD$ on which the sequence of operations $T$ are applied is shown as  shaded grey in Figure~\ref{Hamming}.  The sequence of finite operations changes the arrangements of Betti numbers in the sampled grid block denoted as $H$ in Figure~\ref{Hamming}. The grid squares which were not marked with Betti number $\beta$ before the transformation $T$ may now be marked or filled with Betti number $\beta$. This means to say that the position arrangements of Betti number $\beta$. 

We now introduce the concept of symmetry as the amount of reshuffling happen because of the application of Transformation operation $T$. We have used the metric of Hamming  distance $H_{x}$ to capture the degree of reshuffling. We have moved away from the elementary concept of mirror symmetry and introduced the concept of \textbf{Grid Symmetry}. Our grid symmetry is with the respect to a particular feature of the data like distribution of holes, connected components etc. We have particularly used the topological features as it is more robust to noise. Our method is quite general and can be extended to other features of the data.
To compute the Hamming distance, we have used the following notations\newline
\begin{enumerate}
\item Each small grid square at location $i$ and $j$ is marked as  $(i,j)$.
\item |(i,j)| represents the occupation of the small grid square with the Betti number ($\beta\neq 0$) and is defined formally as,
 	\begin{equation*}
	|(i,j)| = \left\{
	\begin{array}{ll}
1  & \mbox{if } \text{the grid square is occupied } \\
0 & \mbox{if } \text{the grid square is not occupied } 
\end{array}
\right.
\end{equation*}
\item The Hamming distance $H_{i,j}$ computed along each row is given by  
	\begin{equation*}
	H_{i,j} =  |(i,j)| \oplus |(i,j^{'})|,
	\end{equation*}
where the interchange of two consecutive columns $T = T_{1}$ is given by $\pi(j) = j^{'}$, and pasting the outermost column before the first $T = T_{2}$  is given $\pi(1) = j^{'}$ for. $\pi$ is the permutation operator. Here $T_{1}$ and $T_{2}$ are commutation and cyclic permutations as defined in the Section~\ref{LD}.
\item The Hamming distance computed over the sampled  grid block $H(n)$ is given by, 
	\begin{equation}
    \label{eqn:hamming}
	H(n) = \sum_{i = 1}^{n} |(i,j)| \oplus |(i,j^{'})|,
	\end{equation} 
where $n$ is the grid number of the sampled block and $\oplus$ denotes the $\mod$ 2 operations.
\end{enumerate}
Intuitively the Hamming distance $H(n)$ denotes the positional changes of the Betti number $\beta$ after the finite sequence of operations. We now formally define the symmetric distribution of Betti numbers in our grid diagram context as the change positional changes of Betti number in small grid squares as shown Figure~\ref{Ilustration Hamming}. The Figure~\ref{Ilustration Hamming} shows the Hamming distance after the applications of operations. The original grid block containing a particular configurations $\sigma$ of Betti number $\beta$ changes the number of  positional distribution after the application of $T_{2}$ as $12$. So the hamming distance between the two configurations is $12$. After the operation of $T_{2}$ again the Hamming distance decreases to $6$, then after application of the operation $T_{1}$ the hamming distance increases to $14$ and lastly after the operation of $T_{1}$ again the Hamming distance falls back to $6$. The oscillating nature of Hamming distance $H$ is upper bounded proved in our paper in the Section~\ref{Ising}. \newline

\begin{definition}[Symmetric Distribution]
The probability distribution over Betti numbers $\beta$ on a local grid block of size $n$ is symmetric, if the Hamming distance  $H_{i,j}$ computed over the grid block (Equation~\eqref{eqn:hamming}) is bounded by $\eta$ after a finite sequence of operations $T$, 
\begin{equation}
H(n) \leq \eta(n,l)
\end{equation}
\end{definition}
where $\eta(n,l)$ is a integer parameter of choice and it depends on grid number of the sampled block and the grid parameter $l$.\newline 

The conditional probability $\Pr (H \leq \eta)$ is computed as 

\begin{equation}
\Pr (H \leq \eta) = \Pr( \sigma |H \leq \eta)\Pr(\sigma)
\end{equation}

For the case of Cyclic Permutation operation, it is intuitive to see the local symmetry axis passing through the middle of the sampled grid block if the $H(n) = 0$. The value of $H(n)$ gives us the sense of symmetry. More the value of $H(n)$ less will be the symmetry of the sampled grid block. Next, we prove the upper bound of the Hamming distance $H(n)$ is bounded. 

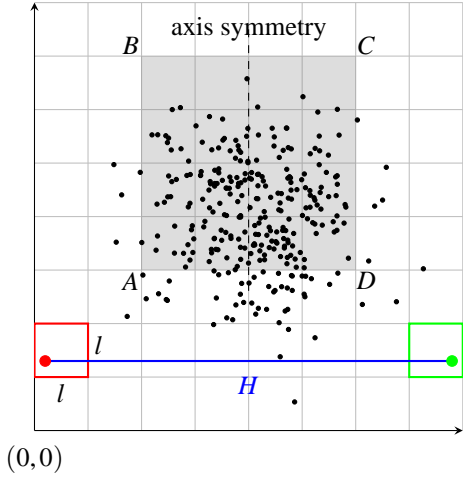
\begin{figure}
\centering
\begin{tikzpicture}
\begin{axis}[
    axis equal image,
    axis lines = left,
    xmin=-4,xmax=4,
    ymin=-4, ymax=4,
    enlargelimits=false,
    grid=both,
	xtick={-3, -2, -1,  0,  1,  2, 3, 4},
    xticklabel=\empty,
    xtick style={draw=none},
    ytick={-3,  -2, -1,  0,  1,  2, 3,4},
    yticklabel=\empty,
    ytick style={draw=none},
	extra x ticks={-4, -3},
	extra x tick labels={{$(0,0)$}, \empty},
    extra x tick style={grid=none},
]

\addplot [only marks, samples=300, mark size=0.8pt] ({invgauss(rnd,rnd)},{invgauss(rnd,rnd)});
\draw [red, thick] (axis cs:-4,-3) rectangle (axis cs:-3,-2);
\draw [green, thick] (axis cs:3,-3) rectangle (axis cs:4,-2);
\addplot[smooth, blue, thick]plot coordinates 
            {
                (-3.8,-2.7)
                (3.8, -2.7)
            }  
            node[above] at (axis cs:0,-3.5) {$H$};
\addplot[mark=*,red] plot coordinates
            {
                (-3.8,-2.7)
            };
\addplot[mark=*,green] plot coordinates 
            {
                (3.8, -2.7)
            };
\node at (axis cs:-3.5,-3.3) {$l$};
\node at (axis cs:-2.8,-2.4) {$l$};
\node at (axis cs: 0, 3.5) {axis symmetry};
\fill[color=gray, opacity=0.27] (axis cs:-2,-1) rectangle (axis cs:2,3);
\addplot[dashed]plot coordinates 
            {
                (0,3.5)
                (0,-1.5)
            };
\node at (axis cs:-2.2, -1.2) {$A$};
\node at (axis cs:-2.2, 3.2) {$B$};
\node at (axis cs: 2.2, 3.2) {$C$};
\node at (axis cs: 2.2, -1.2) {$D$};
\end{axis}
\end{tikzpicture}
\caption{A distribution of Betti numbers on grid}
\label{Hamming}
\end{figure}

For our proof of bounded upper bound of Hamming distance, we assume some known discrete distribution of Betti number $\beta$  on the small grid square i.e the probability distribution $f_{\beta}(i,j)$ of $\beta$ on the grid square $(i,j)$ follows some distributions. This approximation is valid as the real world data lies in between truly random distribution and a probability distribution.  

To simplify the notations, we denote $\mathbf{x} = \{(i,j), l \}$ and the probability density function $f_{\beta} = f_{\beta}(\mathbf{x})$. From hence onward we also denote $f_{\beta}(\mathbf{x})$ as $f(\mathbf{x})$. Thus $f_{\beta}(\mathbf{x})$ defines the probability that  the grid square $(i,j)$ will be occupied by $\beta$. Note that, $f_{\beta}(\mathbf{x})$ is a multivariate function \cite{holtz2008sparse,Garcke:2014:SGA:2669172}.\newline
In the proof, we have used $2$ dimensions as $(1)$ position of the grid square $(2)$ size of the small grid square $l$.

\section{The Ising Model on a 2-D Grid}\label{Ising}
We formally propose a new Ising model of Data and compute the statistical distribution of the Betti numbers. Our modeling of data on a $2{-}D$ grid surface and digitization of Betti numbers are analogous to the spin configuration as in the Quantum Ising Model.  We draw those parallels from the physical Ising model \cite{chakrabarti1996quantum,grimmett2010probability} and propose an analogous \textit{Data Ising Model}. We then compute a probabilistic distribution of configurations Betti numbers on the grid.  This allows us to find the symmetric distribution for the sampled grid after the finite sequence of operations of   commutation, cyclic permutation and stabilization. We introduce the following notations.\newline

We denote the $2{-}D$ planar grid $G = (g, N)$, where $g$ is the small grid square and $N$ is the set of neighboring grid squares as shown in Figure~\ref{Ising Model}. The  small red square grid is surrounded by the set $N$ of four green squares grid as shown in Figure~\ref{Ising Model}. To each small square grid $g \in G$,  we associate a number $1$ or $0$ analogous to quantum spin as in Quantum Ising Model with the local two-dimensional Hilbert space $\mathbb{C}^{2}$ \cite{grimmett2010probability,chakrabarti1996quantum}. We associate each small grid square $g$ with $1$ if the Betti number computed on it is not equal to $0$, and leave the grid square $g$ vacant(as in our model) or mark it with $0$. This marking leaves us the planar grid as a block spin configuration. This allows us to write the \textit{configuration space} for our planar grid as the tensor product of the grid states $(1)$ and $0$ as explained before.\newline
\begin{figure}[h!]
\centering
\begin{tikzpicture}[scale=1.00]
\draw[step= 1cm,color=gray] (0,0) grid (3,3);
\node[green] at (+1.50,+2.50) {$1$};
\node[green] at (+0.50,+1.50) {$1$};
\node[green] at (+1.50,+0.50) {$1$};
\node[red] at (+1.50,+1.50) {$1$};
\node[green] at (+2.50,+1.50) {$1$};
\end{tikzpicture} 
\caption{Ising Model}
    \label{Ising Model}
\end{figure}
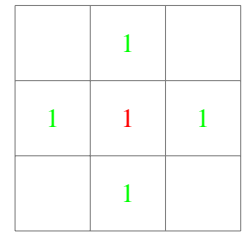\newline
The configuration space $\mathcal{H}$ for the planar $2{-}D$ planar grid is expressed as 
\begin{equation}\label{Tensor Product}
\mathcal{H} = \bigotimes_{g\in G}\mathbb{C}
\end{equation}
for the local Hilbert space $\mathbb{C}$. The eigenvectors for the Hilbert space $\mathbb{C}^{2}$  are $e_{1} = \begin{pmatrix}
           1 & 0
         \end{pmatrix}^T$, $e_{2} = \begin{pmatrix}
           0 & 1
         \end{pmatrix}^T$ of the matrix $ \sigma_{g}^{(3)} =   \begin{pmatrix}
            1  & 0  \\
            0  & -1
         \end{pmatrix}$
at the small grid site $g$, with eigenvalues $\pm 1$.  The other two matrices are  $ \sigma_{g}^{(1)} =   \begin{pmatrix}
            1  & 0  \\
            0  & -1
         \end{pmatrix}$ ,  $ \sigma_{g}^{(2)} =   \begin{pmatrix}
            1  & 0  \\
            0  & -1
         \end{pmatrix}$. We the propose the operator $H$ for our $2{-}D$ planar grid  analogous to the Hamiltonian concept as 
     \begin{equation}\label{sequenceOp}
\mathbb{H} = -\sum_{g_{1},g_{2}\in G}J_{g_{1},g_{2}}\sigma_{g_{1}}^{(3)}\sigma_{g_{2}}^{(3)} -\Gamma\sum_{g\in G}\sigma_{g}^{1}
\end{equation}
where $g_{1}$ and  $g_{2}$ are neighboring small grid squares as shown in Figure~\ref{Ising Model}. \newline

The parameter $J_{g_{1},g_{2}}$ is  defined as the interaction  strength between the small grid squares $g_{1}$ and  $g_{2}$. The parameter $J_{g_{1},g_{2}}$ is critically dependent on the boundaries  $\partial g_{1}$ $\partial g_{2}$ and grid length $l$. The interaction parameter indicates the continuity of data manifold.  The parameter $\Gamma$ in our model denotes the rate of change hierarchical  continuity of across the data manifold.  The is hierarchical scaling variance parameter $\Gamma(l)$ is a function of the dimension of small grid square $l$.

The probability for a configuration $\sigma$ of Betti numbers in our $2{-}D$ planar grid $G$ based on our data based Ising model is  
\begin{equation}\label{Ising Distribution}
P_{G}(\sigma) = \frac{1}{Z_{G}}e^{-\tau  \mathbb{H}(\sigma)}
\end{equation}
where $\tau$ is a parameter. The normalization constant is for all possible configurations $\sigma$ is given by 
\begin{equation}\label{Normalization}
Z_{G} = \sum e^{-\tau \mathbb{H}(\sigma)}
\end{equation}
The expected value for a function $<f(\sigma)>$ of configurations is 
\begin{equation}\label{Expectation}
<f(\sigma)> = \sum_{\sigma}f(\sigma)P_{G}(\sigma)
\end{equation}
We compute the expected value  $H(\sigma)$ of the number of using our proposed Data Ising Model
\begin{equation}\label{Expectation}
<H(\sigma)> =  \sum_{\sigma} H(\sigma) P_{G}(\sigma)
\end{equation}

\begin{remark}[Symmetric Distributions: Trivial Cases]
Given a grid block, when either $P(|(i,j)|)=1$ $\forall i, j$ or $P(|(i,j)|)=0$  $\forall i, j$, the Hamming distance is $0$ after any sequence of legal grid moves.
\end{remark}

\begin{remark}[The Bernoulli Distribution of the Betti Numbers]
Given a grid block of size $n$, when the distribution of the Betti numbers in each grid square independently follows the Bernoulli distribution with a parameter $p$, the expected Hamming Distance after a sequence of grid moves is $2n^2p(1-p)$.   
\end{remark}

\begin{thm}[Bounded Hamming Distance between Symmetric Grid]\label{Proof}
When Commutation, Cyclic Permutation and Stabilization are allowed grid moves in a grid block, the statistical distribution measured by Hamming distance $H(n)$ remains bounded after a sequence of grid moves.
\end{thm}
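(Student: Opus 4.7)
The plan rests on the observation that each small grid square carries a single bit (occupied by $\beta$ or empty), so the sum in Equation~\eqref{eqn:hamming} ranges over at most $n^2$ binary terms and is trivially bounded by $n^2$. The real work of the proof is to refine this into a structurally meaningful $\eta(n, l)$ that uses the specific form of the three legal moves.

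First I would fix a grid block with grid number $n$ and let $S_0$ denote the set of cells occupied by a nonzero Betti number before the transformation, and $S_T$ the corresponding set after a finite sequence $T = T_1^a \circ T_2^b \circ T_3^c$. Each of $T_1$ (swap of two non-nested rows or columns) and $T_2$ (cyclic shift of an outer row or column) is a bijection of the $n \times n$ cells, so both preserve the cardinality $k := |S_0|$. For $T_3$, the paper already stipulates that kink additions and removals occur in pairs to maintain the grid number, so I would restrict attention to balanced sequences in which $|S_T| = |S_0| = k$.

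Next, writing
\begin{equation*}
H(n) = |S_0 \,\triangle\, S_T| = |S_0| + |S_T| - 2 |S_0 \cap S_T|,
\end{equation*}
I obtain $H(n) = 2\bigl(k - |S_0 \cap S_T|\bigr) \leq 2 \min(k, n^2 - k) \leq n^2$. This yields the deterministic bound $\eta(n, l) = n^2$, depending on $l$ only implicitly through the typical value of $k$ set by the cell size. For the probabilistic version, integrating against the Ising distribution in Equation~\eqref{Ising Distribution} gives $\langle H(\sigma) \rangle \leq n^2$, and in the Bernoulli special case of the preceding remark one recovers the sharper value $2 n^2 p (1 - p)$.

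The hard part will be making $\eta(n, l)$ actually reflect the interaction parameter $J_{g_1, g_2}$ and the scale parameter $\Gamma(l)$ rather than collapsing to the trivial $n^2$. A sharper bound would require controlling how the Boltzmann weight $e^{-\tau \mathbb{H}(\sigma)}$ suppresses configurations that realize a large symmetric difference under the permutation induced by $T$; the most promising route I see is to lower-bound $\langle |S_0 \cap S_T| \rangle$ by comparing the energies $\mathbb{H}(\sigma)$ and $\mathbb{H}(T\sigma)$ and invoking a correlation-decay estimate for the Ising measure on the $2$-D grid, so that the expected Hamming distance scales with the grid length $l$ rather than the ambient area $n^2$.
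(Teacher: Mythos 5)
Your proposal is correct, but it reaches the conclusion by a genuinely different route than the paper. You argue combinatorially: $T_1$ and $T_2$ are bijections of the $n\times n$ cell set, and with stabilizations paired (as the paper itself stipulates) the occupied set is merely rearranged, so $H(n) = |S_0\,\triangle\, S_T| = 2\bigl(k-|S_0\cap S_T|\bigr)\le 2\min(k,n^2-k)\le n^2$, a deterministic bound that holds for every configuration, with the Bernoulli remark recovered in expectation. The paper instead argues probabilistically under a distributional assumption: it posits a density $f_\beta(\mathbf{x})$ for cell occupation, performs a dimension-wise (ANOVA) decomposition of $f$ into orthogonal components to express its variance, applies a union bound to $\Pr[H]$, observes $\Pr[H=m]=0$ for $m>n^2$, and then, after an unspecified change of variables $H=g_1(X)$, invokes a Chebyshev-type inequality to conclude $\Pr[H>k\chi(\sigma)]\le 1/(k\kappa(\sigma))$ with $\chi(\sigma),\kappa(\sigma)$ depending on the variance of $f$. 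What the paper's route is aiming for is a distribution-sensitive tail bound, tying the threshold to the statistics of the Betti occupation and hence to the definition of a symmetric distribution via $\Pr(H\le\eta)$; what your route buys is rigor and explicitness, since your bound needs no distributional hypothesis and avoids the paper's undefined ingredients ($g_1$, $\chi$, $\kappa$), at the cost of being insensitive to the data distribution except through $k$. Your closing idea of sharpening $\eta(n,l)$ via energy comparison and correlation decay for the Ising measure goes beyond the paper's proof, which, despite introducing the Hamiltonian $\mathbb{H}$, never actually uses it in the argument.
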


\begin{proof} 
Let $\Omega \subseteq \mathcal{R}$ be a set and a $d (=2)$ dimensional product measure defined on \textit{Borel} subsets of $\Omega^{2}$ using dimension-wise decomposition approximations as
\begin{equation}
d(\mathbf{x}) = \prod d\mu_{j}(x_{j}) = d\mu_1(x_1) \cdot d\mu_2(x_2),
\end{equation}
where $\mathbf{x} = (x_{1}, x_{2}) $ and $\mu_{j}$ ($j = 1, \cdots, d$) are probability measures on Borel subsets of $\Omega$. Here $x_{1} = (i,j)$, $x_{2} = l$ \\

Let $V^{(2)}$ is the Hilbert space of all functions. We define $f(\mathbf{x})$ as a multivariate density function defined as 
\begin{equation}
f: \Omega^{2} \rightarrow [0,1].
\end{equation}

For a subset $\mathbf{u}\subseteq \mathcal{D}$, where $\mathcal{D} = \{ 1, 2\}$, the measure $\mu$ induces projection functions $P_{\mathbf{u}}: V^{(2)} \rightarrow V^{(|\mathbf{u}|)} $ by
\begin{equation}
 P_{\mathbf{u}}f(\mathbf{x_{u}}) :=  \int_{\Omega^{d-|\mathbf{|u|}}}f(\mathbf{x})d\mu_{\mathcal{D}\setminus u}(\mathbf{x})
\end{equation}

Here  $\mathbf{x_{u}}$ denotes the $|\mathbf{u}|-$dimensional vector and $d\mu_{\mathcal{D}\setminus \mathbf{u}}(\mathbf{x}) := \prod_{j\notin \mathbf{u} }d\mu_{j}(x_{j})$. \newline

For $\mathbf{u = \emptyset}$ the projection function is given as 

\begin{equation}
P_{\emptyset}f(\mathbf{x_{\emptyset}}) = \int_{\Omega^{2}}f(\mathbf{x})d\mu(\mathbf{x}) =:  A
\end{equation}

$f \in V^{(2)}$ is then decomposed using dimension-wise decomposition and  as  
\begin{equation}
 f(\mathbf{x}) = \sum_{\mathbf{u}\subseteq \mathcal{D}}f_{\mathbf{u}}(\mathbf{x_{\mathbf{u}}})
 \end{equation}
 
 with the orthogonality conditions 
\begin{equation}\label{ortho}
(f_{\mathbf{u}},f_{\mathbf{u}}) = \mathbf{0}, \quad \mathbf{u} \neq \mathbf{v}
\end{equation}

The $f_{\mathbf{u}}$ are computed recursively as   
\begin{equation}
f_{\mathbf{u}}(\mathbf{x_{\mathbf{u}}}) = P_{\mathbf{u}}f(\mathbf{x_{\mathbf{u}}}) - \sum_{\mathbf{v} \subset \mathbf{u}}f_{\mathbf{v}}(\mathbf{x_{\mathbf{v}}})
\end{equation}

Using the classical \textit{ANOVA Decomposition} and orthogonality condition we write the \textit{variance}  $\sigma(f)^{2}$ as 
 \begin{equation}
 \begin{split}
\sigma^{2}(f) & = \int_{\Omega^{d}}(f(\mathbf{x}) - A )^{2}d(\mathbf{x}) \\ 
              &  = \sum_{\substack {\mathbf{u}\subseteq \mathcal{D,} \\ {\mathbf{u} \neq \emptyset} }}\sigma^{2}(f_{\mathbf{u}})
\end{split}
\end{equation}
where $\sigma^{2}(f_{\mathbf{u}})$ denotes the variance of $f_{\mathbf{u}}$.\newline

Now we compute the probability of Hamming distance $H = m$ for a grid block consisting of $n^{2}$  grid squares as  
\begin{equation}
\begin{split}
\Pr[H ] & = \Pr[\sum_{i = 1}^{n} |(i,j)| \oplus |(i,j^{'})|]\\
           &  \leq \sum_{i = 1}^{n}\Pr[|(i,j)| \oplus |(i,j^{'})|]
\end{split}
 \end{equation}
 Now for the case $m > n^2$,  \quad $\Pr[H = m ]  = 0$. \\
To get the tighter upper bound we use the transformation of random variables and write as there exists a map $g$ and $g_{1}^{-1}$ as 

\begin{equation}
H = g_{1}(X) 
 \end{equation}
\begin{equation}
X =    g_{1}^{-1}(H) 
 \end{equation}

We prove the the upper bound of Hamming distance after a finite sequence of  \textit{Chebyshev's inequality} we write 
\begin{equation}
\begin{split}
 \Pr[H  > k\chi(\sigma) ] & = \Pr[g_{1}(X)  > k\chi(\sigma) ] \\
                          & \leq \frac{1}{k\kappa(\sigma)} 
 \end{split}
\end{equation}
where $\chi(\sigma)$ and $\kappa(\sigma)$  are functions that depend on the variance of $f(\mathbf{x})$
\end{proof}

 We have proposed a general Algorithm

\begin{algorithm}[h!]
\caption{Searching Local Symmetry}\label{losym}
\begin{algorithmic}[1]
\Require Marked Grid Diagram
\Procedure{Hamming Distance}{Sampled Grid Block}
\State Call \textbf{Generating Grid Diagram}
\State Sample the grid diagram $G$
\State Call $T_{1}$, $T_{2}$, $T_{3}$ to generate $T(G) = T_{1}^{a}\circ T_{2}^{b} \circ T_{3}^{c} $
\State Apply $T(G)$ on the sampled Grid Block
\State Compute $H(n) = \sum_{i = 1}^{n} |(i,j)| \oplus |(i,j^{'})|$
\If{$H(n) \leq \eta $}
    \State the sampled grid is symmetric
    \Else{ the sampled grid is not symmetric}
    \EndIf
\EndProcedure%

\Algphase{Generating Grid Diagram}\label{GHD}
\Require Marking Each Small Grid Square with $\beta$
\Function{Computing}{Betti Number $\beta$}
\State Construct the Grid $G$ of parameters $n$ and $l$
\State $G(i,j)$ is the position of small grid square at $(i,j)$ position.
\State Construct Simplicial Complex for each $G(i,j)$
\State Compute the quotient space $H_{k}(X) = \frac{ker \partial_{k}}{ker \partial_{k+1}}$
\State  $\beta$ = dim $(H_{k}(X))$ 
\State  Mark $G(i,j)$ with $\beta$ if $\beta\neq 0$
\State  Leave $G(i,j)$e empty if $\beta\neq 0$
\EndFunction
\Algphase{Commutation Operation $T_{1}$ }
\Procedure{Commutation}{G} \Comment{$T_1$}
\State $c \gets 1$
\While{$c \leq n - 1$} 
\State swap column $c$ and $c + 1$
\State $c \gets c + 2$
\EndWhile
\EndProcedure
\Algphase{Cyclic Permutation $T_{2}$}

\Procedure{Cyclic Permutation}{G} \Comment{$T_2$}
\For{$c \in 1 \dotsc n$}
	\For{$r \in 1 \dotsc n$}
    	\If{$c = 1$}
			{ $G(c, r) = G(n,r)$}
        \Else { $G(c, r) = G(c - 1,r)$}
        \EndIf
    \EndFor
\EndFor
 \EndProcedure

\Algphase{Stabilization $T_{3}$ }
\Procedure{Stabilization}{G} \Comment{$T_3$}
\State Randomly pick a column $c$ and split it into two
\State insert an empty row and fill the intersections with two columns with $\beta$
\EndProcedure
\end{algorithmic}
\end{algorithm}

\section{Related Work}\label{RWW}
The symmetric features of the data set like rotation symmetry, translation symmetry are used as a feature and used a priory in Bayesian machine learning \cite{citeulike:13098859} or used in training the convolutional neural network layers \cite{2016arXiv160202660D,P115.045}. Analogous to our definitions of symmetric operations of cyclic permutation, commutation and stabilization,    \cite{2016arXiv160202660D} proposes four operations which is inserted  in neural network model as layers to model the translation equivariance into rotation equivariance. The notion of equivariance is formally defined as 

\begin{definition}[Equivariant Function] 
The function $f$ is defined as equivariant for a class of transformations $\mathcal{T}$, if for all transformations $\mathbf{T}\in \mathcal{T}$ of the input $x$,  there exists a corresponding transformation $\mathbf{T}^{'}$ of the output $f(x)$, such that the following condition holds
\begin{equation}\label{Equivariant Equation}
f(\mathbf{T}(x)) = \mathbf{T}^{'}f(x)
\end{equation}
\end{definition}

The patterns at different spatial positions are encoded similarly in the feature representations by these layers. This allows parameter sharing much more effectively than a fully connected neural network under similar conditions. They extended to rotation invariance by introducing the four operations of $(1)$ \textbf{Slice} $(2)$   \textbf{ Roll}   $(3)$ \textbf{ Pool }  $(4)$ \textbf{Stack} to build CNNs. The CNNs will detect the cyclic symmetry  in the input data by the rotation over the angles $k.90^{o}, k\in \{ 0, 1, 2, 3\}$. They this group of four rotations form a cyclic group of order $4(C_{4})$ as a restricted form rotational symmetry called \textit{cyclic symmetry}. Similarly the \textit{dihedral symmetry}$D_{4}$ is defined as a set of total eight possible orientations after the operation of horizontal flipping. \cite{2016arXiv160202660D} proposes the computation of approximate invariance by the method of \textit{data augmentation} as presenting the network during training with examples that are randomly perturbed. Given a network with sufficient capacity, it learn invariances.

\section{Experimental Results}\label{ERI}
In this section, we setup a grid block with size $1000 \times 1000$. We conduct two scenarios of sampling Betti number. In the first case, Betti number positions are sampled inside the grid with by a mixtures two Gaussian distributions $\mathcal{N}(\mu_1, \Sigma_1)$ and $\mathcal{N}(\mu_2, \Sigma_2)$. A 2-dimensional Gamma distribution $\Gamma(k,\theta) $ is chosen to generate Betti number position in the second case. The grid block of sampled Betti number position is divided into subsample grid squares with size $5 \times 5$. We perform grid moves including commutation, cyclic permutation, stabilization on these local grid squares. After the transformations, the Hamming distances are obtained between the original grid squares and the corresponding transformed grid squares. With this synthetic data, we conduct four types of tests including (a) commutation, (b) cyclic permutation, (c) chain of transformations, and (d) chains of transformation with noise data. The contour line illustrations of results are portrayed in Figure \ref{fig:gauss} and ~\ref{fig:gamma} respectively for the mixture of Gaussian case and Gamma case.
 \begin{figure}[t]
    \centering
\includegraphics[width=0.5\textwidth]{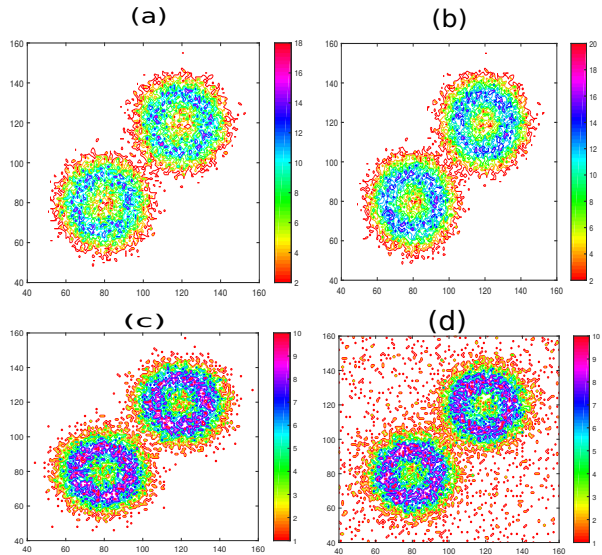}
\caption{Mixture of Gaussian distribution case: (a) Hamming distance between original and commutation; (b) Hamming distance between original and cyclic permutation (c); Hamming distance between the original grid and the grid after chains of transformation $T = T_{1} \circ T_{2} \circ T_{1} \circ T_{2} \circ T_{1} \circ T_{2} \circ T_{1} \circ T_{2} \circ T_{1} \circ T_{2}$; (d) Hamming distance between the original grid and the grid after chains of transformation $T = T_{1} \circ T_{2} \circ T_{1} \circ T_{2} \circ T_{1} \circ T_{2} \circ T_{1} \circ T_{2} \circ T_{1} \circ T_{2}$ with noise}
\label{fig:gauss}
\end{figure}

\begin{figure}[!h]
\includegraphics[width=0.5\textwidth]{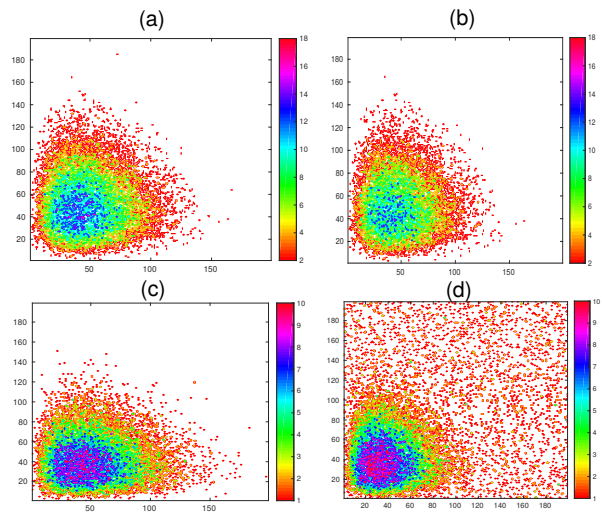}
\caption{Gamma distributions case: (a) Hamming distance between original and commutation; (b) Hamming distance between original and cyclic permutation (c); Hamming distance between the original grid and the grid after chains of transformation $T = T_{1} \circ T_{2} \circ T_{1} \circ T_{2} \circ T_{1} \circ T_{2} \circ T_{1} \circ T_{2} \circ T_{1} \circ T_{2}$; (d) Hamming distance between the original grid and the grid after chains of transformation $T = T_{1} \circ T_{2} \circ T_{1} \circ T_{2} \circ T_{1} \circ T_{2} \circ T_{1} \circ T_{2} \circ T_{1} \circ T_{2}$ with noise}
\label{fig:gamma}
\end{figure}

\section{Conclusions}\label{Conc}
We have proposed a novel method of finding symmetry termed as \textit{grid symmetry } in data  by developing a new framework of $2-$D grid space. We have proposed  three fundamental operations of commutation, cyclic permutation and stabilization to determine the symmetry. The methods of statistical  topology i.e distribution of Betti number is used as a feature in checking symmetry in data. Our method is particularly helpful  Bayesian machine learning where the topological feature(Betti number) is encoded a priory. Our method of spatial distribution of Betti numbers on  $2D$ grid can be encoded in constitutional neural network layers as the property of \textit{translation equivariance} \cite{2016arXiv160202660D,P115.045}.  The method of \textit{data augmentation} as described in \cite{2016arXiv160202660D} for the training of CNN's fits particularly well with our approach, as the random perturbations are well described the topological deformations. Our method throws light on the directions of studying the deep machine learning using scale invariants. We have connected our low dimensional topology models with Ising.  Modeling invariances in deep learning particularly so in unsupervised learning is an active area of research\cite{DBLP:journals/corr/SrivastavaMS15}. The recent Google's breakthrough cat neuron paper the authors uses the unshared weights to allow learning of more invariances other than translational invariances\cite{Le_buildinghigh-level}. Our modeling of topological invariances(Betti number) and priors in the data set naturally fits in the scheme. Further our Ising model parameter $\Gamma(l)$ captures scaling of invariants asymptotically, as we decrease the dimension of the small grid square $l$.

\bibliographystyle{named}
\bibliography{RKL}

\begin{thebibliography}{}

\bibitem[\protect\citeauthoryear{Badrinarayanan \bgroup \em et al.\egroup
  }{2015}]{BadrinarayananM15}
Vijay Badrinarayanan, Bamdev Mishra, and Roberto Cipolla.
\newblock Understanding symmetries in deep networks.
\newblock {\em CoRR}, abs/1511.01029, 2015.

\bibitem[\protect\citeauthoryear{Carlsson}{2014}]{Carlsson14}
Gunnar Carlsson.
\newblock Topological pattern recognition for point cloud data.
\newblock {\em Acta Numerica}, 23:289--368, 5 2014.

\bibitem[\protect\citeauthoryear{Chakrabarti \bgroup \em et al.\egroup
  }{1996}]{chakrabarti1996quantum}
B.K. Chakrabarti, A.~Dutta, and P.~Sen.
\newblock {\em Quantum ising phases and transitions in transverse ising
  models}.
\newblock Lecture notes in physics: Monographs. Springer, 1996.

\bibitem[\protect\citeauthoryear{Chen and Rong}{2010}]{Chen10}
Li~Chen and Yongwu Rong.
\newblock Digital topological method for computing genus and the betti numbers.
\newblock {\em Topology and its Applications}, 157(12):1931 -- 1936, 2010.

\bibitem[\protect\citeauthoryear{Chen}{2004}]{chen2004discrete}
L.~Chen.
\newblock {\em Discrete Surfaces and Manifolds: A Theory of Digital-discrete
  Geometry and Topology}.
\newblock Scientific \& Practical Computing, 2004.

\bibitem[\protect\citeauthoryear{{Culbertson} and
  {Sturtz}}{2013}]{citeulike:13098859}
J.~{Culbertson} and K.~{Sturtz}.
\newblock {Bayesian machine learning via category theory}.
\newblock {\em ArXiv e-prints}, December 2013.

\bibitem[\protect\citeauthoryear{Dieleman \bgroup \em et al.\egroup
  }{2015}]{P115.045}
Sander Dieleman, Kyle Willett, and Joni Dambre.
\newblock Rotation-invariant convolutional neural networks for galaxy
  morphology prediction.
\newblock {\em Monthly Notices of the Royal Astronomical Society},
  450(2):1441--1459, 2015.

\bibitem[\protect\citeauthoryear{{Dieleman} \bgroup \em et al.\egroup
  }{2016}]{2016arXiv160202660D}
S.~{Dieleman}, J.~{De Fauw}, and K.~{Kavukcuoglu}.
\newblock {Exploiting Cyclic Symmetry in Convolutional Neural Networks}.
\newblock {\em ArXiv e-prints}, Feb 2016.

\bibitem[\protect\citeauthoryear{Edelsbrunner}{2007}]{Edelsbrunner07}
Herbert Edelsbrunner.
\newblock An introduction to persistent homology.
\newblock In {\em Proceedings of the 2007 {ACM} Symposium on Solid and Physical
  Modeling, Beijing, China, June 4-6, 2007}, page~9, 2007.

\bibitem[\protect\citeauthoryear{Evako}{2006}]{Evako:2006:TPC:1143472.1649118}
Alexander~V. Evako.
\newblock Topological properties of closed digital spaces: One method of
  constructing digital models of closed continuous surfaces by using covers.
\newblock {\em Comput. Vis. Image Underst.}, 102(2):134--144, May 2006.

\bibitem[\protect\citeauthoryear{Garcke and
  Pf\"{u}lger}{2014}]{Garcke:2014:SGA:2669172}
Jochen Garcke and Dirk Pf\"{u}lger.
\newblock {\em Sparse Grids and Applications - Munich 2012}.
\newblock Springer, 2014.

\bibitem[\protect\citeauthoryear{Gens and Domingos}{2014}]{NIPS2014_5424}
Robert Gens and Pedro~M Domingos.
\newblock Deep symmetry networks.
\newblock In Z.~Ghahramani, M.~Welling, C.~Cortes, N.D. Lawrence, and K.Q.
  Weinberger, editors, {\em Advances in Neural Information Processing Systems
  27}, pages 2537--2545. 2014.

\bibitem[\protect\citeauthoryear{Grimmett}{2010}]{grimmett2010probability}
G.~Grimmett.
\newblock {\em Probability on Graphs: Random Processes on Graphs and Lattices}.
\newblock Institute of Mathematical Statistics Textbooks. Cambridge University
  Press, 2010.

\bibitem[\protect\citeauthoryear{Hedden}{2008}]{Hedden08}
Matthew Hedden.
\newblock An ozsváth–szabó floer homology invariant of knots in a contact
  manifold.
\newblock {\em Advances in Mathematics}, 219(1):89 -- 117, 2008.

\bibitem[\protect\citeauthoryear{Hinton and Salakhutdinov}{2006}]{Hinton06}
G~E Hinton and R~R Salakhutdinov.
\newblock Reducing the dimensionality of data with neural networks.
\newblock {\em Science}, 313(5786):504--507, July 2006.

\bibitem[\protect\citeauthoryear{Holtz}{2008}]{holtz2008sparse}
M.~Holtz.
\newblock {\em Sparse Grid Quadrature in High Dimensions with Applications in
  Finance and Insurance}.
\newblock 2008.

\bibitem[\protect\citeauthoryear{Le \bgroup \em et al.\egroup
  }{2012}]{Le_buildinghigh-level}
Quoc~V. Le, Marc'Aurelio Ranzato, Rajat Monga, Matthieu Devin, Greg Corrado,
  Kai Chen, Jeffrey Dean, and Andrew~Y. Ng.
\newblock Building high-level features using large scale unsupervised learning.
\newblock In {\em Proceedings of the 29th International Conference on Machine
  Learning (ICML)}, 2012.

\bibitem[\protect\citeauthoryear{Lee and Verleysen}{2007}]{lee07}
J.A. Lee and M.~Verleysen.
\newblock {\em Nonlinear Dimensionality Reduction}.
\newblock Information Science and Statistics. Springer, 2007.

\bibitem[\protect\citeauthoryear{Manolescu}{2012}]{Manolescu12}
Ciprian Manolescu.
\newblock {Grid diagrams in Heegaard Floer theory}.
\newblock In {\em {6th European Congress of Mathematics (ECM) }}, 2012.

\bibitem[\protect\citeauthoryear{Niepert and Domingos}{2014}]{NiepertD14}
Mathias Niepert and Pedro~M. Domingos.
\newblock Exchangeable variable models.
\newblock {\em CoRR}, abs/1405.0501, 2014.

\bibitem[\protect\citeauthoryear{Ozs}{2004}]{Ozsvath04}
Holomorphic disks and knot invariants.
\newblock {\em Advances in Mathematics}, 186(1):58 -- 116, 2004.

\bibitem[\protect\citeauthoryear{Ozsv{\'a}th \bgroup \em et al.\egroup
  }{2015}]{ozsvath15}
P.S. Ozsv{\'a}th, A.I. Stipsicz, and Z.~Szab{\'o}.
\newblock {\em Grid Homology for Knots and Links:}.
\newblock Mathematical Surveys and Monographs. 2015.

\bibitem[\protect\citeauthoryear{Poon and Domingos}{2009}]{PoonD09}
Hoifung Poon and Pedro~M. Domingos.
\newblock Unsupervised semantic parsing.
\newblock In {\em Proceedings of the 2009 Conference on Empirical Methods in
  Natural Language Processing (EMNLP)}, pages 1--10, 2009.

\bibitem[\protect\citeauthoryear{Reidemeister}{1932}]{reidemeister32}
von~K. Reidemeister.
\newblock {\em Knotentheorie}.
\newblock Springer, 1932.

\bibitem[\protect\citeauthoryear{Sarkar and Wang}{2010}]{Sarkar_acombinatorial}
Sucharit Sarkar and Jiajun Wang.
\newblock A combinatorial description of some heegaard floer homologies.
\newblock {\em Annals of Mathematics}, 171(2):1213 -- 1236, 2010.

\bibitem[\protect\citeauthoryear{Sarkar}{2010}]{Sarkar10}
Sucharit Sarkar.
\newblock {Grid diagrams and the Ozsvath-Szabo tau-invariant}.
\newblock 2010.

\bibitem[\protect\citeauthoryear{Schwarzbach and
  Kosmann-Schwarzbach}{2010}]{schwarzbach2010noether}
B.E. Schwarzbach and Y.~Kosmann-Schwarzbach.
\newblock {\em The Noether Theorems: Invariance and Conservation Laws in the
  Twentieth Century}.
\newblock Sources and Studies in the History of Mathematics and Physical
  Sciences. Springer, 2010.

\bibitem[\protect\citeauthoryear{{Srivastava} \bgroup \em et al.\egroup
  }{2015}]{DBLP:journals/corr/SrivastavaMS15}
N.~{Srivastava}, E.~{Mansimov}, and R.~{Salakhutdinov}.
\newblock {Unsupervised Learning of Video Representations using LSTMs}.
\newblock {\em ArXiv e-prints}, February 2015.

\bibitem[\protect\citeauthoryear{Tenenbaum \bgroup \em et al.\egroup
  }{2000}]{Tenenbaum00}
J.~B. Tenenbaum, V.~Silva, and J.~C. Langford.
\newblock {A Global Geometric Framework for Nonlinear Dimensionality
  Reduction}.
\newblock {\em Science}, 290(5500):2319--2323, 2000.

\bibitem[\protect\citeauthoryear{Wang}{2012}]{wang12}
J.~Wang.
\newblock {\em Geometric Structure of High-Dimensional Data and Dimensionality
  Reduction}.
\newblock Springer, 2012.

\end{thebibliography}
\end{document}